\documentclass{article}

\PassOptionsToPackage{numbers, compress}{natbib}

\usepackage[preprint]{neurips_2026}

\usepackage[utf8]{inputenc} % allow utf-8 input
\usepackage[T1]{fontenc}    % use 8-bit T1 fonts
\usepackage[colorlinks,citecolor=blue]{hyperref}       % hyperlinks
\usepackage{url}            % simple URL typesetting
\usepackage{booktabs}       % professional-quality tables
\usepackage{amsfonts}       % blackboard math symbols
\usepackage{nicefrac}       % compact symbols for 1/2, etc.
\usepackage{microtype}      % microtypography
\usepackage[dvipsnames]{xcolor}         % colors
\usepackage{amsmath}
\usepackage{amsthm}
\usepackage[linesnumbered,ruled,lined,noend]{algorithm2e}
\usepackage{mathtools}
\usepackage{subcaption}
\usepackage{amssymb}
\usepackage{bm}
\usepackage{multirow}
\usepackage{tikz}
\usepackage{wrapfig}
\usetikzlibrary{shapes.geometric}

\DeclareMathOperator{\E}{\mathbb{E}}

\DeclareMathOperator{\R}{\mathbb{R}}

\DeclareMathOperator{\Var}{Var}
\DeclareMathOperator{\Cov}{Cov}
\DeclareMathOperator{\Corr}{Corr}
\DeclareMathOperator{\bias}{bias}

\newtheorem{theorem}{Theorem}

\newtheorem{proposition}{Proposition}

\title{Heuristic Pathologies and Further \\ Variance Reduction via Uncertainty Propagation \\ in the AIVAT Family of Techniques}

\author{%
  Juho Kim \\
  Computer Science Department \\
  Carnegie Mellon University \\
  \texttt{juhok@cs.cmu.edu} \\
  \And
  Tuomas Sandholm \\
  Computer Science Department, CMU \\
  Strategic Machine, Inc. \\
  Strategy Robot, Inc. \\
  Optimized Markets, Inc. \\
  \texttt{sandholm@cs.cmu.edu} \\
}

\begin{document}

\maketitle

\begin{abstract}
	How should an agent's performance in a multiagent environment be evaluated when there is a limited sample size or a high cost of running a trial?
	The AIVAT family of variance reduction techniques was proposed to address this challenge by introducing unbiased low-variance estimators of agents' expected payoffs.
	An important component of AIVAT is a heuristic value function that discriminates between potentially low- and high-value counterfactual histories.
	A notable gap in the literature is that there is little to no constraint or guideline on how the heuristic value function should be chosen or how uncertainty in its output should be handled.

	In our first contribution, we parameterize the heuristic value function to highlight AIVAT's potential vulnerabilities: a) the sample variance can be set pathologically low by directly applying gradient descent on the sample variance, and b) one can p-hack to draw a desired statistical conclusion via gradient descent/ascent on the test statistic.
	The main takeaway is that the heuristic value function should be fixed prior to observing the evaluation data!
	In our second contribution, we show how the heuristic uncertainty can be propagated to quantify the uncertainty of AIVAT estimates.
	It is then possible to further reduce the variance using inverse-variance weighted averaging, but AIVAT's unbiasedness guarantee may have to be sacrificed.
	In our experiments, we use a dataset of 10,000 poker hands to demonstrate our heuristic pathology and uncertainty results, with the latter yielding a 43.0\% reduction in the number of samples (poker hands) needed to draw statistical conclusions.
\end{abstract}

\section{Introduction}

Evaluating an agent's performance in a multiagent environment is often challenging, such as when running each trial is costly or time-consuming.
This is especially true when demonstrating the superhuman capability of an AI agent, requiring human experts to compete against the agent for a long time.
For example, the evaluation of the superhuman heads-up poker AI agent Libratus~\cite{brownandsandholm2018} lasted 20 days, morning to evening, and involved four human professionals competing in parallel for a \$200,000 prize pool.
The resource-intensive process of generating sufficient data to draw statistically significant conclusions cannot be avoided unless a low-variance estimator of the outcome is used.

The AIVAT~\cite{burchetal2018} family of variance reduction techniques was proposed to handle the often high-variance nature of extensive-form games.
AIVAT reduces the variance introduced by both nature and player actions in primarily two ways.
First, using a heuristic value function, AIVAT evaluates the potential values of counterfactual histories when counterfactual actions are applied to observed histories with known probabilities.
Second, AIVAT uses the fact that, regardless of a particular player's hidden information, others would have acted identically as they did in the original observations.
\citet{burchetal2018} showed that the AIVAT estimator is unbiased, and experimentally demonstrated a reduction in the required number of trials by ``more than a factor of 10'' to make the same statistical claims as when it is not used.
Furthermore, the power of AIVAT increases as the number of players whose strategy is known increases. However, it is also true that its power decreases as fewer player strategies are taken into account, with it being reduced to MIVAT~\cite{whiteandbowling2009} when only chance probabilities are known.

The `5 humans + 1 AI' experiment of the poker AI agent Pluribus~\cite{brownandsandholm2019}, evaluations of another poker AI agent DeepStack~\cite{moravciketal}, and several editions of the Annual Computer Poker Competitions (ACPC)~\cite{bardetal} represent the landmark applications of the AIVAT family of techniques.
The first application was particularly striking in that, although Pluribus finished the experiment with a negative payoff overall, AIVAT was able to show that Pluribus was, in fact, superhuman.

\subsection{Our contributions}

In this paper, we provide two types of contributions with regard to the AIVAT family of variance reduction techniques.
The first type is \textbf{cautionary}.
We expand on the fact that surprisingly little has been said about the constraints on how the heuristic value function can be developed.
In the proof of the unbiasedness of the advantage sum by~\citet{zinkevichetal2006}, which forms the basis of DIVAT~\cite{billingsandkan2006}, MIVAT~\cite{whiteandbowling2009}, and AIVAT~\cite{burchetal2018}, they state that ``any'' and ``all'' heuristic value functions yield an unbiased estimator of the true value.
\citet{whiteandbowling2009} suggest learning a linear value function from the sample data, where they optimize for the sample variance as a proxy of the true variance, but offer no additional guidelines on the learning process.
\citet{burchetal2018} use their AI agent's self-play values as the arbitrary fixed heuristic value function, which was also involved during the data-generating process of the same data on which the technique is applied.
In this paper, we highlight AIVAT's potential vulnerabilities by showing that it is possible to learn a heuristic value function that a) obtains pathologically low variance or b) p-hacks to falsely draw a desired statistical conclusion about an agent's performance.
Using the gameplay of Pluribus, we train such a function by parameterizing the heuristic outputs and applying gradient descent/ascent on the desired objective.
The main takeaway is that the heuristic value function \textit{should be fixed prior to observing the evaluation data}!
(The use of Pluribus data is purely for demonstration and should not be taken as a criticism of~\citet{brownandsandholm2019}; their results were correct.)

In our second contribution, we note that while AIVAT takes account of the uncertainty associated with player actions, in its usage of the heuristic value function, it introduces another source of uncertainty, namely, how certain the heuristic value function is in its outputs.
One may be more certain of some outputs of the heuristic value function while being less so about others.
This is certainly the case when the value function outputs are nondeterministically-approximated game-theoretic values (\textit{e.g.}, Monte Carlo rollouts and/or randomized clustering during abstraction) or are learned from existing data~\cite{whiteandbowling2009} and are predicting from inputs in the low- versus high-density region of the training distribution.
We quantify the uncertainty in terms of the variance and demonstrate how the heuristic value function uncertainty can be propagated to the estimate level, obtaining a measure of how uncertain a particular value estimate is.
We also show that inverse-variance weighting can be applied, where less weight is given to estimates with more uncertainty and vice versa, to achieve \textbf{further variance reduction}, albeit at the risk of incorporating some bias into the estimate.
Nonetheless, we a) show the necessary condition for this bias to be zero, b) demonstrate how this bias can be estimated, and c) contend that it is unlikely for game-playing agents to manipulate the bias to appear to play better.
Using the gameplay data of Pluribus, we report a reduction of up to 43.0\% in the number of required trials (\textit{i.e.}, poker hands) to reach statistical conclusions.
Our findings make multiagent evaluation more scalable.

\section{Notation and background}
\label{sec:related_works}

In this section, we define the notation used throughout the paper and provide the background on extensive-form games and the AIVAT family of variance reduction techniques.

\subsection{Extensive-form games}

In this paper, we focus our analysis on extensive-form games, but ideas from AIVAT can also be applied to other representations.
An extensive-form game has a finite set of players $P$ (including chance $p_c$) and histories $H$.
Every history is a sequence of actions played by each player $i \in P$, and is associated with a player $p(h)$ and a set of available actions $A(h)$.
$h \cdot a = h'$ denotes that applying $a \in A(h)$ at $h$ leads to $h'$.
If $p(h) = p_c$, then $f_c(h, a)$ gives a fixed probability distribution over each available action $a \in A(h)$.
Each terminal history $z \in Z \subseteq H$ has a utility $u_i(z)$ for every player $i$.

The imperfect information setting is represented by information sets $\mathcal{I}_i$: a partition of histories belonging to a non-chance player $i \in P \setminus \{p_c\}$.
A player $i$ cannot distinguish between $h, h' \in I \in \mathcal{I}_i$.
Therefore, $A(h) = A(h')$, and we denote the set of available actions at an information set by $A(I)$.
Each player $i$ plays with a strategy $\sigma_i(I)$ which assigns a probability distribution over $A(I)$.
Then a strategy profile $\sigma$ is defined as a tuple of all player strategies.
We use $\pi(h)$ to represent the probability of reaching $h$ given players play using $\sigma$.
The contribution of player $i$ to this probability is $\pi_i(h)$.

\subsubsection{Agent evaluation}

Agent evaluation in extensive-form games typically requires obtaining an estimate of the expected utility of a particular player $i$ on a given strategy profile $\sigma$:
\[
	\E_{z \in Z}[u_i(z)|\sigma] = \sum_{z \in Z} \pi(z) u_i(z).
\]
Before the advent of variance reduction techniques for extensive-form games, Monte Carlo samples $z_1, \hdots z_T$ were drawn independently to calculate the mean player utility,
\[
	\bar u_i = \frac{1}{T} \sum_{t=1}^T u_i(z_t),
\]
which is an unbiased estimator, \textit{i.e.},
\[
	\E[\bar u_i|\sigma] = \E_{z \in Z}[u_i(z)|\sigma],
\]
and has the following variance:
\[
	\Var [\bar u_i|\sigma] = \frac{1}{T} \Var [u_i(z)|\sigma].
\]

The multiagent environment's stochasticity and desired statistical significance level influence the choice of $T$.
One is therefore limited by the cost of running each trial, and it may be far too expensive to draw a statistically significant conclusion using the Monte Carlo method.

\subsection{Variance reduction techniques}

The AIVAT family of variance reduction techniques specializes the control variates method for agent evaluation in extensive-form games to give a low-variance estimate of any function $v(z)$, an example of which is player utility.
The current state-of-the-art variance reduction technique is AIVAT~\cite{burchetal2018}, which can be thought of as a combination of its two predecessors: imaginary observations~\cite{bowlingetal2008} and the advantage sum~\cite{zinkevichetal2006}.
The AIVAT family of techniques yields unbiased estimators of the value function.

\subsubsection{Control variates}

The control variates method~\cite[Ch.~4]{glasserman} is a standard way to reduce variance in Monte Carlo methods.
We begin by giving a brief description in the language of agent evaluation.
In this setting, we seek to estimate $\E_{z \in Z}[v(z)|\sigma]$.
Suppose the existence of another value function $w(\cdot)$ where $\omega = \E_{z \in Z}[w(z)|\sigma]$ is known.
Then, the following is an unbiased estimator of $\E_{z \in Z}[v(z)|\sigma]$:
\[
	\hat v(z) = v(z) - c(w(z) - \omega)
\]
for any choice of constant $c$.
Its variance is as follows:
\begin{equation}
	\Var(\hat v(z)) = Var(v(z)) + c^2 Var(w(z)) - 2c \Cov(v(z), w(z)).
	\label{eqn:control-variates}
\end{equation}
When $c^2 Var(w(z)) - 2c \Cov(v(z), w(z)) \le 0$, variance reduction is achieved.
The optimal choice of $c$ can be derived by differentiating (\ref{eqn:control-variates}) with respect to $c$:
\[
	c^* = \frac{\Cov(v(z), w(z))}{\Var(w(z))},
\]
which results in
\[
	\Var(\hat v(z)) = (1 - \Corr(v(z), w(z))^2) \Var(v(z)).
\]

\subsubsection{Advantage sum}

The advantage sum technique uses control variates with a heuristic value function to reduce the variance of the estimate, and is of the following form:
\[
	\hat{v}(z) = v(z) - \hat{v}_c(z),
\]
where $\hat{v}(\cdot)$, $v(\cdot)$, and $\hat{v}_c(\cdot)$ denote the estimate, value function, and correction term, respectively.
The correction term, which can be thought of as control variates, is defined as follows:
\[
	\hat{v}_c(z) = \sum_{h \cdot a \in K(z)}{\left(v'(h \cdot a) - \sum_{a' \in A(h)} f_c(h, a') v'(h \cdot a')\right)},
\]
with $v'(\cdot)$ the heuristic value function and $K(z)$ the set of histories preceding $z$ where the probability distribution of available actions at the immediate parent history is known.
MIVAT~\cite{whiteandbowling2009} represents a special case of the advantage sum where no player's action probabilities, except those of nature, are known.
In this particular context, the subtraction of the correction term can be thought of as canceling the effect of luck (or lack thereof).
No matter the choice of the heuristic value function $v'(\cdot)$,~\citet{zinkevichetal2006} showed that the advantage sum (and hence MIVAT) is an unbiased estimator of the expected value, \textit{i.e.}, $\E_{z \in Z}[\hat{v}(z)|\sigma] = \E_{z \in Z}[v(z)|\sigma]$.

\citet{whiteandbowling2009} proposed using a linear function as the heuristic value function, which is to be trained on existing sample data.
In doing so, they proposed to minimize the sample variance as a proxy of the true variance, hence the following optimization problem for the mean-squared error:
\begin{equation}
	\underset{\hat v: Z \mapsto \R}{\textbf{Minimize: }} \sum_{t=1}^T\left(\hat v(z_t) - \frac{1}{T} \sum_{t'=1}^T \hat v(z_{t'})\right)^2.
	\label{eqn:optimization-problem}
\end{equation}
They derived a closed-form formula of an optimal linear value function given a feature engineering function and applied it to heads-up fixed-limit, heads-up no-limit, and 6-max fixed-limit poker, observing reductions of up to about 62\%, 23\%, and 18\%, respectively, in the standard deviation.

\subsubsection{Imaginary observations}

Imaginary observations do not use a heuristic value function or control variate terms.
Instead, given a trial outcome $z$, it directly applies the value function on a subset of terminal nodes and obtains an estimate of the expected payoff using importance sampling.
It is well-suited for estimating the expected payoff of alternative strategies (which they dubbed the off-policy case).
The imaginary observation estimator is unbiased in both the on-policy and off-policy cases with full information; however, this guarantee is lost with partial information~\cite{bowlingetal2008}.
Nevertheless, imaginary observations in the partial information case remain a useful tool as they can yield a low-variance estimate.

\subsubsection{AIVAT}

AIVAT, visualized in Appendix~\ref{sec:aivat}, is a generalization of the advantage sum where imaginary observations are applied to each considered history.
It is of the following form:
\[
	\hat{v}(z) = \hat{v}_b(z) + \hat{v}_c(z),
\]
where $\hat{v}_b(z)$ is the base term:
\[
	\hat{v}_b(z) = \frac{\sum_{z' \in U(z)} \pi(z') v(z')}{\sum_{z' \in U(z)} \pi(z')},
\]
and $\hat{v}_c(z)$ is the correction term:
\[
	\hat{v}_c(z) = \sum_{h \cdot a \in K(z)} \left(\frac{\sum_{a' \in A(U(h))} \sum_{h' \in U(h)} \pi(h' \cdot a') v'(h' \cdot a')}{\sum_{h' \in U(h)} \pi(h')} - \frac{\sum_{h' \in U(h)} \pi(h' \cdot a) v'(h' \cdot a)}{\sum_{h' \in U(h)} \pi(h' \cdot a)}\right).
\]

\noindent $U(h)$ is defined as a set of histories differing from $h$ only by the private information belonging to $p(h)$.
$A(U(h))$ is the common set of actions available to that player.
\citet{burchetal2018} showed that AIVAT is an unbiased estimator regardless of $v'(h)$ and proposed using an AI agent's self-play values as the outputs of the heuristic value function, which can also be used to generate the very data being evaluated.

\section{Heuristic pathologies}
\label{sec:heuristic-pathology}

We are ready to present our new results.
We begin by presenting a \textbf{cautionary tale}, highlighting AIVAT's potential vulnerabilities.
We show that one can learn a heuristic value function to a) obtain pathologically low variance or b) p-hack to falsely draw a desired statistical conclusion about an agent's performance.
Our results underscore the need to fix the heuristic value function before evaluation.

For convenience, we simplify the expression for the AIVAT estimate $\hat v(\cdot)$ by rewriting it as an affine function of the outputs of the value function $v'(\cdot)$, as follows:
\begin{equation}
	\hat v(z) = b(z) + \sum_{h \in H} \mathbf{c}(z)_h v'(h),
	\label{eqn:simplified}
\end{equation}
where $b(z)$ is the affine shift and $\mathbf{c}(z) \in \R^H$ is the vector of coefficients of each heuristic output $v'(\cdot)$.
Depending on the game, the size of $H$ may be huge; however, for our purpose, we only need to consider the histories and counterfactual histories $h$ encountered during agent evaluation (\textit{i.e.}, where $\mathbf{c}(z)_h \ne 0$), which is usually a small fraction of the original game tree size if the game is large.
The detailed derivation process, as well as how $b(z)$ and $\mathbf{c}(z)$ are defined, are relegated to Appendix~\ref{sec:simplification}.

In our development of a pathological heuristic value function, we define a vector $\bm{\theta} \in \R^H$ to parameterize each heuristic output as follows:
\begin{equation}
	v_{\bm{\theta}}'(h) = \theta_h.
	\label{eqn:parameterized-value-function}
\end{equation}
By parameterizing as such, we give the heuristic value function the maximum expressive power.
Again, the size of $H$ can be huge.
However, we only need to parameterize the (counterfactual) histories encountered during agent evaluation.
We can further simplify the AIVAT estimate expression:
\begin{equation}
	\hat v_{\bm{\theta}}(z) = b(z) + \sum_{h \in H} \mathbf{c}(z)_h v_{\bm{\theta}}'(h) = b(z) + \langle \mathbf{c}(z), \bm{\theta} \rangle.
	\label{eqn:estimate}
\end{equation}

\subsubsection{Optimizing for the sample variance}

Using (\ref{eqn:estimate}), the proxy objective of optimization problem given by~\citet{whiteandbowling2009} in (\ref{eqn:optimization-problem}) can be refined as follows:
\begin{align*}
	& \underset{\bm{\theta} \in \R^H}{\textbf{Minimize: }} C(\bm{\theta})
		= \sum_{t=1}^T\left(\hat v_{\bm{\theta}}(z_t) - \frac{1}{T} \sum_{t'=1}^T \hat v_{\bm{\theta}}(z_{t'})\right)^2 = \sum_{t=1}^T\left(\left(b(z_t) - \bar{b}\right) + \left< \mathbf{c}(z_t) - \bar{\mathbf{c}}, \bm{\theta} \right> \right)^2,
\end{align*}
where $\bar{b} = \frac{1}{T} \sum_{t=1}^T b(z_t)$ and $\bar{\mathbf{c}} = \frac{1}{T} \sum_{t=1}^T \mathbf{c}(z_{t})$. (A more detailed derivation is given in Appendix~\ref{sec:detailed}.)

\begin{proposition}
	On a given set of trials, there exists an optimal parameter vector $\bm{\theta}^*$ that achieves the lowest possible sample variance of the estimates.
	\label{pro:optimal}
\end{proposition}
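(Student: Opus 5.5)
The plan is to recognize $C(\bm{\theta})$ as an ordinary least-squares objective and use the fact that linear least squares always has a minimizer. Restrict $\bm{\theta}$ to the finitely many (counterfactual) histories $h$ with $\mathbf{c}(z_t)_h \ne 0$ for some trial $t$; every other coordinate leaves $C$ unchanged. Let $d$ be the number of these coordinates. Stack the centered coefficient vectors as the rows of a matrix $M \in \R^{T \times d}$, with $M_{t,\cdot} = \mathbf{c}(z_t) - \bar{\mathbf{c}}$, and form the vector $\mathbf{y} \in \R^T$ with $y_t = -(b(z_t) - \bar b)$. By the expression for $C$ derived just before Proposition~\ref{pro:optimal},
\[
C(\bm{\theta}) = \lVert M\bm{\theta} - \mathbf{y} \rVert_2^2 ,
\]
which is a convex quadratic in $\bm{\theta}$ and bounded below by $0$.

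Existence of a minimizer follows from a projection argument rather than from coercivity. Coercivity can fail here, because $M$ may be rank-deficient. That happens, for instance, when there are more parameters than trials, or when some direction $\mathbf{c}(z_t)-\bar{\mathbf{c}}$ is constant across trials. This rank issue is the only real obstacle, and the projection handles it as follows:
\begin{itemize}
\item The image $\{M\bm{\theta} : \bm{\theta} \in \R^d\}$ is a linear subspace of $\R^T$, so it is closed.
\item The orthogonal projection $\mathbf{p}$ of $\mathbf{y}$ onto this subspace is the unique closest point to $\mathbf{y}$.
\item Since $\mathbf{p}$ lies in the image, some $\bm{\theta}^*$ satisfies $M\bm{\theta}^* = \mathbf{p}$.
\item Then $C(\bm{\theta}^*) = \lVert \mathbf{p}-\mathbf{y}\rVert^2 \le C(\bm{\theta})$ for every $\bm{\theta}$.
\end{itemize}

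The minimizer can also be written explicitly, which ties the result to the closed form of \citet{whiteandbowling2009}. The choice $\bm{\theta}^* = M^{+}\mathbf{y}$, with $M^+$ the Moore--Penrose pseudoinverse, solves the normal equations $M^\top M \bm{\theta} = M^\top \mathbf{y}$, and these equations are always consistent. Because $C$ is convex, any solution of the normal equations is a global minimizer. The minimizer is generally not unique, since any element of $\ker M$ can be added to it. Finally, the sample variance equals $C(\bm{\theta})/T$ (or $C(\bm{\theta})/(T-1)$, depending on the normalization), so minimizing $C$ is the same as minimizing the sample variance. This yields the optimal $\bm{\theta}^*$ claimed in the statement.
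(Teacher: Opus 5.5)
Your proof is correct, but it justifies the key step differently from the paper. The paper writes $C(\bm{\theta}) = \|\bm{y} + \bm{X}\bm{\theta}\|_2^2$ as you do, up to sign convention. It then expands the square, notes that $\bm{X}^\top \bm{X}$ is positive semidefinite so that $C$ is convex, and concludes directly that a global minimum exists and is attained. That inference does not hold for convex functions in general: $x \mapsto e^{x}$ on $\mathbb{R}$ is convex and bounded below but has no minimizer. The conclusion is true here only because $C$ is a convex \emph{quadratic} that is bounded below, and the paper leaves that fact implicit. Your projection argument supplies exactly the missing ingredient. The image of $M$ is a subspace of $\mathbb{R}^T$, hence closed, so the point nearest to $\mathbf{y}$ is attained. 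This holds whether or not $M^\top M$ is singular, and singularity is the typical regime here, since the number of parameterized (counterfactual) histories can far exceed $T$. The paper's route is shorter; yours is complete. Yours also yields the explicit minimizer $M^{+}\mathbf{y}$, the consistency of the normal equations, and the full solution set $\bm{\theta}^* + \ker M$, which shows that the pathological heuristic is generally far from unique. Your restriction to finitely many coordinates is harmless, and not even needed, because the image of the map $\bm{\theta} \mapsto M\bm{\theta}$ already lies in the finite-dimensional space $\mathbb{R}^T$.
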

The proof is in Appendix~\ref{sec:optimal}.
It follows from the fact that finding the pathological heuristic outputs that minimize the sample variance of a given dataset is a least-squares problem.

\subsubsection{Optimizing for the \textit{t}-statistic}

However, optimizing for the sample variance may not be very interesting, as we do not explicitly control what we can say about the given data.
More specifically, one of the main purposes of using variance reduction techniques is to draw statistical conclusions about an agent's performance.
This is usually done~\cite{brownandsandholm2019} via a one-sided \textit{t}-test from which a p-value is computed and compared against a desired statistical confidence level.
A p-value can potentially be hacked by either minimizing or maximizing the \textit{t}-statistic, as shown in the optimization problem below.
\begin{align*}
	\underset{\bm{\theta} \in \R^H}{\textbf{Optimize: }} \frac{\bar{v_{\bm{\theta}}} - \mu_0}{s_{\bm{\theta}}/\sqrt{T}},
\end{align*}
where $\bar{v_{\bm{\theta}}} = \sum_{t=1}^T \hat v_{\bm{\theta}}(z_t)$ and $s_{\bm{\theta}}^2 = \frac{\sum_{t=1}^T (\hat v_{\bm{\theta}}(z_t) - \bar{v_{\bm{\theta}}})^2}{T - 1}$.
We apply gradient descent/ascent on the parameters to minimize or maximize our objective.

\subsection{Experiments on heuristic pathologies}
\label{sec:experiment}

We conducted experiments on the publicly released poker hand history data from the Pluribus experiment~\cite{brownandsandholm2019}.
The rules of Texas hold'em are given in Appendix~\ref{sec:texas-holdem}.

\subsubsection{Optimizing for the sample variance}

\begin{wraptable}{R}{0.33\linewidth}
	\centering
	\caption{
		Results on the Pluribus data in milli-big blinds per hand (mbb/h) using a pathological heuristic value function that minimizes the sample variance.
		$SE$ stands for the standard error of the mean.
	}
	\label{tab:experiment-1}
	\begin{tabular}{c|cc}
		\toprule
		Player & Win rate & $SE$ \\
		\midrule
		Budd & 2059 & 50 \\
		MrWhite & 2043 & 37 \\
		MrOrange & 2053 & 29 \\
		Hattori & 2045 & 68 \\
		MrBlue & 2062 & 26 \\
		Pluribus & 2062 & 25 \\
		MrPink & 2048 & 32 \\
		Joe & 2064 & 64 \\
		Bill & 2061 & 30 \\
		MrBlonde & 2040 & 49 \\
		Eddie & 2049 & 34 \\
		MrBrown & 2035 & 67 \\
		Gogo & 2006 & 113 \\
		ORen & 2034 & 90 \\
		\bottomrule
	\end{tabular}
\end{wraptable}

In our first experiment, we aim to produce a result that is of extremely low variance that does not pertain to reality.
To do so, we optimize on the sample variance as the proxy objective.
Since Pluribus's action distribution is not known, our AIVAT implementation reduces to MIVAT.
We trained a heuristic value function, outputting an estimate of the expected utilities at every player position.
Although the least-squares solution could theoretically have been solved, the transformed data matrix $[(\mathbf{c}(z_t) - \bar{\mathbf{c}})^\top]_{t \in \{1, \dots, T\}}$ was too large to fit into memory, so we applied gradient descent on the parameters for 250 iterations with the Adam optimizer ($\eta = 100, \beta_1 = 0.9, \beta_2 = 0.999, \lambda = 0$).
The unusually high learning rate ($\eta$) was chosen as lower (and more conventional) choices (\textit{e.g.}, $0.001$) converged too slowly.

Using the learned parameters, we were able to achieve \textbf{pathologically} low variance estimates of all players' expected utilities, as shown in Table~\ref{tab:experiment-1}.
The results show that every player won by an extremely high margin of over 2,000 mbb/h.
Considering that the poker community typically characterizes win rates of 100 mbb/h as immense, these values are clearly unrealistic.
They also violate the zero-sum constraint; this is because the heuristic value function was trained without such a constraint.
This experiment shows that, when the data is fixed and the heuristic value function is directly optimized using the proxy objective given by~\citet{whiteandbowling2009}, one can produce a nonsensical result that does not pertain to reality, especially with enough degrees of freedom to be exploited.

\subsubsection{Optimizing for the \textit{t}-statistic}

In our second experiment, we explore how to falsely make desired claims about the given data by p-hacking via gradient descent/ascent to either minimize or maximize the \textit{t}-statistics.
Here, we again train a parameterized heuristic value function except that, this time, a player's \textit{t}-statistic is being optimized.
For each run, when the \textit{t}-statistic is being minimized, the null hypothesis is that a player did not have a negative win rate.
Conversely, when a player's \textit{t}-statistic is being maximized, the null hypothesis is that the player did not have a positive win rate.
The goal of this experiment is to obtain a small enough p-value for each hypothesis test to draw statistically significant conclusions that every player both won and lost.
The parameters were optimized for 10 iterations with the Adam optimizer ($\eta = 100, \beta_1 = 0.9, \beta_2 = 0.999, \lambda = 0$).
Again, the common choices for $\eta$ converged too slowly.

The results are shown in Table~\ref{tab:experiment-2}.
For every player, it was possible to train separate heuristic value functions, one for winning and another for losing, using which one could show that they both won and lost on the same data with overwhelmingly low p-values.
The implication of this is that although the AIVAT family of variance reduction techniques is guaranteed to be unbiased, one can still craft a heuristic value function that supports whatever conclusion one wants to draw when the data being evaluated is known a priori.
One interpretation of this result is that, when an \textbf{adversary} controls the heuristic value function and is aware of the data being evaluated, the adversary can make the evaluator draw conclusions that are nonsensical, contradictory, or incorrect.

\begin{table}[t!]
	\centering
	\caption{Results on the Pluribus data using pathological heuristic value functions for each \textit{t}-statistic.}
	\label{tab:experiment-2}
	\begin{tabular}{c|cc|cc}
		\toprule
		\multirow{2}{*}{Player} & \multicolumn{2}{c|}{Losing} & \multicolumn{2}{c}{Winning} \\
		& \textit{t}-statistic & p-value & \textit{t}-statistic & p-value \\
		\midrule
		Budd & -49.538 & \textless10\textsuperscript{-373} & 55.012 & \textless10\textsuperscript{-433} \\
		MrWhite & -76.612 & \textless10\textsuperscript{-816} & 75.638 & \textless10\textsuperscript{-802} \\
		MrOrange & -95.291 & \textless10\textsuperscript{-1294} & 89.609 & \textless10\textsuperscript{-1188} \\
		Hattori & -37.584 & \textless10\textsuperscript{-212} & 39.327 & \textless10\textsuperscript{-226} \\
		MrBlue & -87.433 & \textless10\textsuperscript{-1207} & 93.654 & \textless10\textsuperscript{-1336} \\
		Pluribus & -95.050 & \textless10\textsuperscript{-1399} & 92.900 & \textless10\textsuperscript{-1353} \\
		MrPink & -69.132 & \textless10\textsuperscript{-767} & 68.549 & \textless10\textsuperscript{-757} \\
		Joe & -44.570 & \textless10\textsuperscript{-278} & 42.285 & \textless10\textsuperscript{-259} \\
		Bill & -70.778 & \textless10\textsuperscript{-814} & 70.991 & \textless10\textsuperscript{-818} \\
		MrBlonde & -70.923 & \textless10\textsuperscript{-606} & 68.642 & \textless10\textsuperscript{-582} \\
		Eddie & -66.739 & \textless10\textsuperscript{-710} & 69.562 & \textless10\textsuperscript{-756} \\
		MrBrown & -42.189 & \textless10\textsuperscript{-249} & 44.325 & \textless10\textsuperscript{-266} \\
		Gogo & -22.699 & \textless10\textsuperscript{-77} & 19.617 & \textless10\textsuperscript{-63} \\
		ORen & -20.375 & \textless10\textsuperscript{-73} & 20.807 & \textless10\textsuperscript{-76} \\
		\bottomrule
	\end{tabular}
\end{table}

Note that we are not criticizing the application of AIVAT by~\citet{brownandsandholm2019}.
We are simply using their dataset to demonstrate how invalid conclusions can be drawn due to heuristic pathologies.
These pathologies also shed light on the need to \textbf{fix} the heuristic value function prior to evaluation.
While we are unaware of existing AIVAT applications that violate this, to our knowledge, we are the first to point out vulnerabilities of this nature in the AIVAT family of variance reduction techniques.

In the next section, we provide a practical methodology of training and evaluating the AIVAT family of estimators, especially in the face of data scarcity, which leads to additional variance reduction.

\section{Further variance reduction using heuristic uncertainty}
\label{sec:heuristic-uncertainty}
We now move to our second contribution, which is a new way of obtaining additional variance reduction on top of AIVAT.
We continue our discussion about the heuristic value function in variance reduction techniques by approaching it from a different angle: uncertainty of heuristic outputs.
We use variance to quantify uncertainty.
We continue from the simplified expression shown in (\ref{eqn:simplified}).
Defining the covariance matrix $\mathbf{\Sigma}(z)$ where, for each $(h_1, h_2) \in H \times H$, $\mathbf{\Sigma}(z)_{h_1, h_2} = \Cov(v'(h_1), v'(h_2))$,
\begin{equation}
	\Var(\hat v(z)) = \Var\left(b(z) + \sum_{h \in H} \mathbf{c}(z)_h v'(h)\right) = \Var\left(\sum_{h \in H} \mathbf{c}(z)_h v'(h)\right) = \mathbf{c}(z)^\top \mathbf{\Sigma}(z) \mathbf{c}(z). \label{eqn:estimate-variance}
\end{equation}
Note that, in practice, $\mathbf{c}(z)$ is sparse (see Appendix~\ref{sec:simplification}), and hence $\mathbf{\Sigma}(z)$ can be implemented as sparse, setting irrelevant rows and columns to zeros.
One can simplify the expression further, as in the following, by assuming that the heuristic value outputs are uncorrelated:
\begin{equation}
	\Var(\hat v(z)) = \sum_{h \in H} \mathbf{c}(z)_{h}^2 \Var(v'(h)).
	\label{eqn:estimate-variance-uncorrelated}
\end{equation}
Note that the above assumption was \textbf{not} made in our main experimental results, presented later.

Just as we are given $T$ independent trials $z_1, \hdots, z_T$ and assume the values $v(z_1), \hdots, v(z_T)$ are independent and identically distributed (i.i.d.), we make the standard~\cite{brownandsandholm2019} assumption that the AIVAT estimates are also i.i.d.
Then, the variance of the arithmetic mean of AIVAT estimates $\bar v = \frac{1}{T} \sum_{t=1}^T \hat v(z_t)$ is as follows:
\begin{equation}
	\Var(\bar v) = \frac{\sum_{t=1}^T \Var(\hat v(z_t))}{T^2}.
	\label{eqn:arithmetic-mean-variance}
\end{equation}
We can treat the variance of the mean as a proxy objective to be minimized.
It is possible to improve from (\ref{eqn:arithmetic-mean-variance}) by assigning an (unnormalized) weight $w_t$ to each estimate to obtain the weighted average, \textit{i.e.},
\[
	\bar v^* = \frac{\sum_{t=1}^T w_t \hat v(z_t)}{\sum_{t=1}^T w_t}.
\]
Consider inverse-variance weighting (IVW)~\cite[Ch.~4]{hartungetal2011}, which puts less weight on those with higher uncertainty and vice versa: $w_t = \frac{1}{\Var(\hat v(z_t))}$.
The variance of the IVW average is then
\begin{equation}
	\Var(\bar v^*) = \frac{1}{\sum_{t=1}^T \frac{1}{\Var(\hat v(z_t))}}.
	\label{eqn:inverse-variance-weighted-mean-variance}
\end{equation}

\begin{proposition}
	Assuming that the AIVAT estimates are independent, their IVW average yields the minimum variance amongst all weighted averages.
	\label{pro:ivw}
\end{proposition}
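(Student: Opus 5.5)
To prove Proposition~\ref{pro:ivw}, I would reduce to normalized weights and minimize a quadratic form under a single linear constraint. Write $\sigma_t^2 = \Var(\hat v(z_t))$, assumed positive and finite, and let $\alpha_t = w_t / \sum_{s=1}^T w_s$. Every weighted average $\bar v^* = \sum_{t=1}^T \alpha_t \hat v(z_t)$ then has $\sum_t \alpha_t = 1$. Rescaling the unnormalized weights does not change the average, so it suffices to optimize over $\bm{\alpha}$ on this affine hyperplane. By the assumed independence of the AIVAT estimates, all cross covariances vanish, and
\[
	\Var(\bar v^*) = \sum_{t=1}^T \alpha_t^2 \sigma_t^2 .
\]
This is the same computation that produced (\ref{eqn:arithmetic-mean-variance}). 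The problem is therefore to minimize $\sum_t \alpha_t^2\sigma_t^2$ subject to $\sum_t \alpha_t = 1$.

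The cleanest route is a Cauchy--Schwarz lower bound. Write
\[
	1 = \sum_t \alpha_t = \sum_t (\alpha_t \sigma_t)\cdot(1/\sigma_t).
\]
Then
\[
	1 \le \left(\sum_t \alpha_t^2\sigma_t^2\right)\left(\sum_t \sigma_t^{-2}\right),
\]
which gives $\Var(\bar v^*) \ge \bigl(\sum_t \sigma_t^{-2}\bigr)^{-1}$ for every admissible weight vector. Equality in Cauchy--Schwarz holds exactly when $\alpha_t\sigma_t \propto 1/\sigma_t$, that is, $\alpha_t \propto \sigma_t^{-2}$. This is precisely the IVW choice $w_t = 1/\Var(\hat v(z_t))$. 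Substituting these normalized weights recovers (\ref{eqn:inverse-variance-weighted-mean-variance}), so the bound is attained and the IVW average is the minimizer. Because the objective is strictly convex when all $\sigma_t^2>0$, the minimizer is also unique.

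As a cross-check, I could instead use a Lagrangian argument. Setting the gradient of $\sum_t \alpha_t^2\sigma_t^2 - \lambda(\sum_t\alpha_t - 1)$ to zero gives $\alpha_t = \lambda/(2\sigma_t^2)$. The constraint then fixes $\lambda$, and convexity guarantees this stationary point is the global minimum.

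The only delicate point is the degenerate case. If some $\sigma_t^2 = 0$, the IVW weights are undefined, and the minimum variance is $0$, attained by putting all weight on zero-variance estimates. I would state the proposition under the assumption $\sigma_t^2>0$ and mention this limiting case in a remark. I would also note that the weights here are treated as fixed constants rather than data-dependent, since that is how the variances are used in the paper.
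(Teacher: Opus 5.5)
Your proposal is correct and takes the same route as the paper: independence reduces $\Var(\bar v^*)$ to $\sum_t \alpha_t^2 \Var(\hat v(z_t))$ over normalized weights, and that quadratic form is then minimized. The paper stops at this reduction and cites the classical result from \citet[Ch.~4]{hartungetal2011}, whereas your Cauchy--Schwarz argument actually proves it, and it also gives the attained minimum (\ref{eqn:inverse-variance-weighted-mean-variance}), uniqueness of the minimizer, and a treatment of the degenerate case $\Var(\hat v(z_t)) = 0$.
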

The proof is in Appendix~\ref{sec:ivw}.
It follows from a well-known result in statistics; cf.~\citet[Ch.~4]{hartungetal2011}.
Thus, IVW improves upon uniform weighting.
When the estimates being averaged over are all equally uncertain, (\ref{eqn:arithmetic-mean-variance}) and (\ref{eqn:inverse-variance-weighted-mean-variance}) are equivalent.
Ideally, we would like the IVW average to also be an unbiased estimator.
This is true when the weights are independent of the data being averaged over, but not in general, as shown in the following proposition:
\begin{proposition}
	When the weights are independent of the data being averaged over, the weighted average of AIVAT estimates is unbiased, \textit{i.e.}, $\E[\bar v^*|\sigma] = \E_{z \in Z}[v(z)|\sigma]$.
	But when the weights are correlated with the data being averaged over, the weighted average of AIVAT estimates is, in general, biased, \textit{i.e.}, $\E[\bar v^*|\sigma] \ne \E_{z \in Z}[v(z)|\sigma]$. Assuming that the data points are i.i.d., the asymptotic bias is $\Cov(w, \hat v(z)|\sigma)/\E[w]$, where $w$ and $\hat v(z)$ are random variables representing the unnormalized weights and estimated expected utilities.
	\label{pro:estimator-bias}
\end{proposition}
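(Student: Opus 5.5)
The proof has three parts, matching the three claims of Proposition~\ref{pro:estimator-bias}. Throughout, let $\mu = \E_{z \in Z}[v(z)|\sigma]$. By the unbiasedness of AIVAT~\cite{burchetal2018}, $\E[\hat v(z_t)|\sigma] = \mu$ for each $t$.

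\emph{Unbiasedness under independent weights.} Write the estimator as $\bar v^* = \sum_t \tilde w_t \hat v(z_t)$ with normalized weights $\tilde w_t = w_t/\sum_s w_s$. These are nonnegative and sum to one. If the weight vector $(w_1,\dots,w_T)$ is independent of the estimates $(\hat v(z_1),\dots,\hat v(z_T))$, then each $\tilde w_t$ is independent of $\hat v(z_t)$. Hence
\[
\E[\bar v^*|\sigma] = \sum_t \E[\tilde w_t]\,\E[\hat v(z_t)|\sigma] = \mu \sum_t \E[\tilde w_t] = \mu .
\]
One could instead condition on the weights and use the tower property, which covers random weights too. The only technical point is that $\sum_s w_s > 0$, which holds for positive weights such as the inverse variances.

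\emph{Bias in general.} Here a counterexample is enough. Take $T=1$ with arbitrary weights, and the estimator is unbiased trivially. So instead take $T=2$, let $\hat v(z_t)$ take the values $0$ or $1$ with probability $1/2$ each, and set $w_t = 1 + \hat v(z_t)$. Enumerating the four equally likely outcomes gives weighted averages $0, 0, 2/3, 2/3, 1$, with the two middle values coming from the mixed outcomes. The expectation is $\tfrac14(0+\tfrac23+\tfrac23+1) = 7/12 \neq 1/2$. This simple two-point distribution can be realized as AIVAT estimates in a trivial chance-only game, so the claim is not vacuous.

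\emph{Asymptotic bias.} Write
\[
\bar v^* = \frac{\frac1T\sum_t w_t \hat v(z_t)}{\frac1T \sum_t w_t}.
\]
Under the i.i.d.\ assumption, the strong law of large numbers sends the numerator to $\E[w\hat v(z)]$ and the denominator to $\E[w]$, assuming finite first moments and $\E[w] > 0$. By the continuous mapping theorem, $\bar v^* \to \E[w \hat v(z)]/\E[w]$ almost surely. Since $\E[w\hat v(z)] = \Cov(w,\hat v(z)|\sigma) + \E[w]\mu$, the limit equals $\mu + \Cov(w,\hat v(z)|\sigma)/\E[w]$, which is the stated bias.

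The main obstacle is the meaning of ``asymptotic bias'': almost-sure convergence of $\bar v^*$ to the limit does not by itself give convergence of $\E[\bar v^*]$. I would define the asymptotic bias as the limit of $\bar v^* - \mu$. If convergence of expectations is wanted, I would add a uniform integrability condition, for example bounded estimates (utilities are bounded in poker) together with weights bounded away from zero and infinity. Dominated convergence then gives $\E[\bar v^*] \to \mu + \Cov(w,\hat v)/\E[w]$. Alternatively, a delta-method expansion of the ratio gives the same leading term, with an $O(1/T)$ correction.
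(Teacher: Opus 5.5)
Your proof is correct, and only the first part matches the paper. There, for normalized weights $w_t^*$, the paper writes $\E[\bar v^*|\sigma] = \E_{z \in Z}[v(z)|\sigma] + \sum_{t=1}^T \Cov(w_t^*, \hat v(z_t)|\sigma)$ and notes that independence makes the covariance sum vanish, which is your factorization of $\E[\tilde w_t \hat v(z_t)]$.

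The other two parts take a different route. For the ``in general biased'' claim, the paper stops at that identity: unbiased if and only if the covariance sum is zero. This is an exact criterion for every finite $T$, but it never shows the sum can actually be nonzero. Your two-point example supplies that. Your list of outcomes has a stray extra $0$ (the four values are $0,\tfrac{2}{3},\tfrac{2}{3},1$), but $7/12$ is right.

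For the asymptotic bias, the paper linearizes $w_t/S$, with $S=\sum_s w_s$, around $(\E[w_t],\E[S])$. This gives $\Cov(w,\hat v(z)|\sigma)/\E[w]$ minus a term $\Cov(S,\hat v(z)|\sigma)/(T\,\E[w])$, which it drops as $T\to\infty$; this is the delta-method alternative you mention. It yields a finite-$T$ approximation, but its ``$\approx$'' steps are uncontrolled. The higher-order terms it discards (for instance one proportional to $\Cov((w-\E[w])^2,\hat v(z)|\sigma)/T$) are of the same $O(1/T)$ order as the correction it keeps, so only the limit is really supported.

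Your SLLN and continuous-mapping argument proves that limit rigorously, as the almost-sure limit of $\bar v^*-\E_{z\in Z}[v(z)|\sigma]$. Your dominated-convergence step then turns it into a statement about $\E[\bar v^*|\sigma]$, making explicit the ambiguity in ``asymptotic bias'' that the paper glosses over. You need less than you assume there: since $\bar v^*$ is a convex combination of the $\hat v(z_t)$, bounded estimates alone dominate $|\bar v^*|$. On the weights, only $0<\E[w]<\infty$ is required.
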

The proof is in Appendix~\ref{sec:estimator-bias}.
To address this, we can \textbf{estimate} this bias, and our estimator can still remain useful by showing that the estimated bias is much smaller than the reduction in the standard error of the weighted mean.
While we can technically use this estimate to offset the IVW mean, doing so adds a highly non-trivial term to the variance.

It is important to note that the estimator we introduce does \textbf{not} inherently introduce bias.
Indeed, whether or not our estimator is biased depends on the specific learning algorithm used to train the heuristic value function, since this determines the AIVAT estimates we obtain and their corresponding weights.
Additionally, fixing the specific learning algorithms can help us derive even stronger theoretical bounds.
Later in our experiments, the model we utilized as the heuristic value function is \textit{Gaussian process regressor (GPR)}~\cite{williamsandrasmussen}, which relies on Gaussian assumptions, of which one of the fundamental statistical properties is that the mean and the variance are structurally independent.
The estimated uncertainty depends only on the input space, so the output and estimated uncertainty have zero covariance under Gaussian priors, and thus our estimator is unbiased.
The same holds for other models we use in Appendix~\ref{sec:observations}: \textit{Bayesian ridge (BR)}~\cite{tipping} and \textit{automatic relevance determination (ARD)}~\cite{mackay}, which operate under the same foundational statistical assumptions.

Besides, it is unclear how a player could even play to manipulate such a weighting scheme, since the bias depends on the correlation between the IVW weights and the expected utility estimates.
It would require them to play differently while predicting the confidence of the heuristic value function chosen by the evaluator in its outputs for different histories.
A similar bias-variance tradeoff was previously explored in the context of imaginary observations (a predecessor of AIVAT) in the case of partial information.
We also demonstrate this tradeoff in our experiments below.

\subsection{Experiments on heuristic uncertainty propagation}
\label{sec:experiment2}

We conducted experiments on the publicly available poker hand history data from the Pluribus experiment~\cite{brownandsandholm2019}.
Since Pluribus's action distribution is not known to us, our implementation of AIVAT reduces to MIVAT.
In learning the heuristic value function for our MIVAT estimator, we chose the \textit{Gaussian Process Regressor (GPR)}~\cite{williamsandrasmussen} with Dot Product and White kernels.
We refer to this MIVAT estimator as MIVAT-GPR.
For feature engineering, we took an approach similar to that of~\citet{whiteandbowling2009}, detailed in Appendix~\ref{sec:feature-engineering}.
The Pluribus data consists of 10,000 poker hands.
We applied k-fold cross-validation with $k=10$ to calculate the estimates for the entirety of the data.
In each fold, a heuristic value function was trained using all relevant history-payoff pairs in the training set, and then the resulting estimator was evaluated on the test set.
While training MIVAT-GPR, in each fold, 1,000 hands were subsampled from the training set due to computational constraints.

\begin{table}
	\centering
	\caption{
		Results on the Pluribus data in milli-big blinds per hand (mbb/h).
		$SE$ stands for the standard error of the (weighted) mean, and `Est.' is an abbreviation for `Estimated'.
	}
	\label{tab:results-pluribus}
	\begin{tabular}{cc|ccc}
		\toprule
		Estimator & Weighting & Win rate & $SE$ & Est. bias \\
		\midrule
		\multirow{2}{*}{MIVAT-GPR (ours)} & Uniform & -25 & 99 & -- \\
		& IVW & -22 & \textbf{75} & 3 \\
		\midrule
		Raw & Uniform & -70 & 88 & -- \\
		MIVAT-WB~\cite{whiteandbowling2009} & Uniform & -98 & 85 & --\\
		AIVAT~\cite{burchetal2018,brownandsandholm2019} & Uniform & 48 & 25 & -- \\
		\bottomrule
	\end{tabular}
\end{table}

Our objective for the experiment is not to draw statistical conclusions about Pluribus's superhuman performance, which, realistically, would require access to Pluribus's strategies.
Instead, we seek to demonstrate that IVW gives rise to an estimate that has a lower standard error of the (weighted) mean than uniform weighting (more details are available in Appendix~\ref{sec:standard-error-of-the-weighted-mean}).
In our results, we also include the results for AIVAT, as reported by~\citet{brownandsandholm2019}.
It would be unfair to compare our performance with theirs, as they had access to Pluribus's action probabilities, which vastly increase AIVAT's variance-reduction power compared to when only the chance probabilities are known (which is what we know).
Additionally, the results using MIVAT with the linear heuristic value function obtained through the steps introduced by~\citet{whiteandbowling2009} were included.
We refer to this estimator as MIVAT-WB.
In Appendix~\ref{sec:closed-form-solution}, we derive this function for the more general case of AIVAT, but we also critique its assumptions and lack of regularization.

Table~\ref{tab:results-pluribus} shows the performance of our MIVAT-GPR estimator, along with that of the baselines.
When taking the simple mean, MIVAT-GPR does worse than MIVAT-WB, but this is understandable, as we used far less data to train GPR due to computational constraints.
Also, under uniform averaging, MIVAT-GPR performs worse than even when no estimator is applied (see `Raw').
It is clear that, due to the much smaller training data used to train its heuristic value function, GPR is quite inadequate in calculating the heuristic value of some counterfactual situations.
However, under IVW averaging, our MIVAT-GPR estimator is also observed to far outperform itself under uniform averaging and MIVAT-WB.
Indeed, MIVAT-GPR notably achieved approximately 24.5\% reduction in the standard error of the (weighted) mean compared to when uniform averaging was used, corresponding to roughly 43.0\% reduction in the number of hands required to reach the same statistical significance.
Additionally, the estimated bias of the IVW estimate is about an order of magnitude smaller than the reduction in the standard error of the weighted mean.
This is evidence that IVW, which puts lower weights on outputs with higher uncertainty, can help achieve a low-variance estimate of expected player utilities in games even when heuristics are sometimes poor.

Another result to note is that MIVAT-WB~\cite{whiteandbowling2009} performs noticeably worse compared to MIVAT-GPR under IVW and not much better than when no estimator is used.
At first glance, this is surprising, as their machine learning procedure is supposed to produce a linear heuristic value function that is optimal in the sense that it minimizes the sample variance of the training set.
In Appendix~\ref{sec:closed-form-solution}, we note its lack of regularization, and it is clear that this learning process is too aggressive and can be prone to overfitting, which hurts the estimator's performance.
To our knowledge, we are the first to note that much simpler procedures of learning the heuristic value function can outperform the linear value function of~\citet{whiteandbowling2009}.
Additional experimental results are detailed in Appendix~\ref{sec:observations}.

\section{Conclusions and future research}

In this paper, we studied the roles played by the underspecified heuristic value function in the AIVAT family of variance reduction techniques.
First, we developed a heuristic value function that allowed us to pathologically lower the variance and draw desired statistical conclusions about agents' performances.
Our results showed that, when the data is fixed and known, an adversary can craft a heuristic value function that can lead to drawing nonsensical, misleading, or contradictory statistical conclusions.
Thus, the heuristic value function should be fixed \textit{prior to evaluation}!

Second, we showed that the uncertainty of the heuristic value function outputs, if known, can be propagated to find the uncertainty of AIVAT estimates.
Our contribution on heuristic uncertainty is particularly notable in that it leverages extra-game-theoretical considerations to yield further variance reduction.
We demonstrated that IVW further reduces the variance and yields a 43.0\% reduction in the number of samples (poker hands) required to draw statistical conclusions.
While this may come at a cost of possible bias, we can estimate this bias, and fixing the learning algorithm for the heuristic value function can yield stronger theoretical guarantees.

A possible future direction is developing new methods to quantify the uncertainty of self-play values approximated using non-deterministic game-solving algorithms, which can then be used by AIVAT with IVW.

\section*{Acknowledgements}

This work has been supported by the Vannevar Bush Faculty Fellowship ONR N00014-23-1-2876, National Science Foundation grant RI-2312342, and NIH award A240108S001.
Any opinions, findings, and conclusions or recommendations expressed in this material are those of the authors and do not necessarily reflect the views of the funding agencies.

\bibliographystyle{abbrvnat}
\bibliography{neurips_2026}

%%%%%%%%%%%%%%%%%%%%%%%%%%%%%%%%%%%%%%%%%%%%%%%%%%%%%%%%%%%%

\newpage
\appendix

\section{Visualization of AIVAT}
\label{sec:aivat}

\begin{figure}[th!]
	\centering
	\begin{tikzpicture}
		\node[circle, draw, minimum size=1cm, thick] (h1) at (0,0) {$h_1$};
		\node[rectangle, draw, minimum size=0.886cm, thick] (h2) at (-1,-1.5) {$h_2$};
		\node[diamond, draw, minimum size=1cm, thick] (c1) at (-0.5,-3) {};
		\node[circle, draw, minimum size=1cm, thick] (h3) at (-1.5,-4.5) {$h_3$};
		\node[rectangle, draw, minimum size=0.886cm, thick] (h4) at (-1.5,-6) {$h_4$};
		\node[diamond, draw, minimum size=1cm, thick] (c2) at (0.5,-7.5) {};
		\node (z) at (-1.5,-9) {$z$};

		\draw[thick] (h1) -- (h2) node[midway, right] {$a_1$};
		\draw[thick] (h2) -- (c1) node[midway, right] {$a_2$};
		\draw[thick] (c1) -- (h3);
		\draw[thick] (h3) -- (h4) node[midway, right] {$a_3$};
		\draw[thick] (h4) -- (c2) node[midway, right] {$a_4$};
		\draw[thick] (c2) -- (z);

		\draw[dashed, thick] (h1) -- +(-2.5,-1.5);
		\draw[dashed, thick] (h1) -- +(0.5,-1.5);

		\draw[dashed, thick] (h2) -- +(-2,-1.5);
		\draw[dashed, thick] (h2) -- +(-1,-1.5);
		\draw[dashed, thick] (h2) -- +(2,-1.5);
		\draw[dashed, thick] (h2) -- +(3,-1.5);

		\draw[dashed, thick] (c1) -- +(-3,-1.5);
		\draw[dashed, thick] (c1) -- +(0.5,-1.5);
		\draw[dashed, thick] (c1) -- +(1.5,-1.5);

		\draw[dashed, thick] (h3) -- +(-1.5,-1.5);
		\draw[dashed, thick] (h3) -- +(1.5,-1.5);
		\draw[dashed, thick] (h3) -- +(2.5,-1.5);

		\draw[dashed, thick] (h4) -- +(-2.125,-1.5);
		\draw[dashed, thick] (h4) -- +(-1.125,-1.5);
		\draw[dashed, thick] (h4) -- +(-0.125,-1.5);
		\draw[dashed, thick] (h4) -- +(0.875,-1.5);

		\draw[dashed, thick] (c2) -- +(-0.5,-1.5);
		\draw[dashed, thick] (c2) -- +(0.5,-1.5);
		\draw[dashed, thick] (c2) -- +(1.5,-1.5);

		\node[anchor=west, xshift=1cm] at (1.5, 0.25) {$+\ \mathbb{E}[v'(h_1 \cdot a)]$};
		\node[anchor=west, xshift=1cm] at (1.5, -0.25) {$-\ \mathbb{E}[v'(h_1 \cdot a_1)]$};
		\node[anchor=west, xshift=1cm] at (1.5, -1.25) {$+\ \mathbb{E}[v'(h_2 \cdot a)]$};
		\node[anchor=west, xshift=1cm] at (1.5, -1.75) {$-\ \mathbb{E}[v'(h_2 \cdot a_2)]$};
		\node[anchor=west, xshift=1cm] at (1.5, -4.25) {$+\ \mathbb{E}[v'(h_3 \cdot a)]$};
		\node[anchor=west, xshift=1cm] at (1.5, -4.75) {$-\ \mathbb{E}[v'(h_3 \cdot a_3)]$};
		\node[anchor=west, xshift=1cm] at (1.5, -5.75) {$+\ \mathbb{E}[v'(h_4 \cdot a)]$};
		\node[anchor=west, xshift=1cm] at (1.5, -6.25) {$-\ \mathbb{E}[v'(h_4 \cdot a_4)]$};
		\node[anchor=west, xshift=1cm] at (1.5, -8.75) {$+\ \mathbb{E}[v(z)]$};
		\node[anchor=west, xshift=1cm] at (1.5, -9.25) {$= \hat{v}(z)$};
	\end{tikzpicture}
	\caption{A diagram representing AIVAT. Circles, squares, and diamonds denote chance nodes, player-1 nodes, and player-2 nodes, respectively. This diagram illustrates a situation where only the action probabilities of the nodes belonging to nature and Player 1 are known. The right side of the diagram shows the terms involved in the calculation of the AIVAT estimate. Here, $U(h) = \{h\}$ for the sake of simplicity.}
	\label{fig:aivat}
\end{figure}

A simple visualization of AIVAT is given in Figure~\ref{fig:aivat}.

\section{Simplifying the expression for AIVAT}
\label{sec:simplification}

We simplified the expression for the AIVAT estimate $\hat v(\cdot)$ by rewriting it as an affine function of the outputs of the value function $v'(\cdot)$ in Section~\ref{sec:heuristic-pathology}.
The detailed derivation process is given below:
\begin{align*}
	\hat v(z)
		& = \hat v_b(z) + \hat v_c(z) \\
		& = \frac{\sum_{z' \in U(z)} \pi(z') v(z')}{\sum_{z' \in U(z)} \pi(z')} + \sum_{h \cdot a \in K(z)} \left(\frac{\sum_{a' \in A(U(h))} \sum_{h' \in U(h)} \pi(h' \cdot a') v'(h' \cdot a')}{\sum_{h' \in U(h)} \pi(h')} \right. \\
		& \qquad \left. - \frac{\sum_{h' \in U(h)} \pi(h' \cdot a) v'(h' \cdot a)}{\sum_{h' \in U(h)} \pi(h' \cdot a)}\right) \\
		& = \sum_{z' \in U(z)} \frac{\pi(z') v(z')}{\sum_{z'' \in U(z)} \pi(z'')} \\
			& \qquad + \sum_{h \cdot a \in K(z)} \sum_{a' \in A(U(h))} \sum_{h' \in U(h)} \frac{\pi(h' \cdot a')}{\sum_{h'' \in U(h)} \pi(h'')} v'(h' \cdot a') \\
			& \qquad + \sum_{h \cdot a \in K(z)} \sum_{h' \in U(h)} \left( -\frac{\pi(h' \cdot a)}{\sum_{h'' \in U(h)} \pi(h'' \cdot a)} \right) v'(h' \cdot a).
\end{align*}
For the sake of simplicity, define the affine shift $b(z)$,
\[
	b(z) = \sum_{z' \in U(z)} \frac{\pi(z') v(z')}{\sum_{z'' \in U(z)} \pi(z'')},
\]
the sets $S(z)$ and $S'(z)$ of $H$ (note that $S'(z) \subseteq S(z)$),
\[
	S(z) = \{h' \cdot a': \exists h \cdot a \in K(z), a' \in A(U(h)), h' \in U(h)\}, \quad S'(z) = \{h' \cdot a: \exists h \cdot a \in K(z), h' \in U(h)\},
\]
and $\mathbf{c}(z) \in \R^H$, the vector of coefficients of each $v'(\cdot)$.
Explicitly, for each $h \cdot a \in S(z) \setminus S'(z)$,
\[
	\mathbf{c}(z)_{h \cdot a} = \frac{\pi(h \cdot a)}{\sum_{h' \in U(h)} \pi(h')},
\]
for each $h \cdot a \in S'(z)$,
\begin{equation}
	\mathbf{c}(z)_{h \cdot a} = \frac{\pi(h \cdot a)}{\sum_{h' \in U(h)} \pi(h')} - \frac{\pi(h \cdot a)}{\sum_{h' \in U(h)} \pi(h' \cdot a)},
	\label{eqn:negative-coefficient}
\end{equation}
and finally for the remaining $h \notin S(z)$,
\[
	\mathbf{c}(z)_{h} = 0.
\]
Putting it all together, we obtain the following simplified expression:
\begin{equation}
	\hat v(z) = b(z) + \sum_{h \in H} \mathbf{c}(z)_h v'(h).
\end{equation}

\section{Detailed refinement of the optimization problem for sample variance}
\label{sec:detailed}

The detailed derivation process for the refinement of the proxy objective of the optimization problem given by~\citet{whiteandbowling2009} in (\ref{eqn:optimization-problem}) is as follows:
\begin{align*}
	\underset{\bm{\theta} \in \R^H}{\textbf{Minimize: }} C(\bm{\theta})
		& = \sum_{t=1}^T\left(\hat v_{\bm{\theta}}(z_t) - \frac{1}{T} \sum_{t'=1}^T \hat v_{\bm{\theta}}(z_{t'})\right)^2 \\
		& = \sum_{t=1}^T\left(b(z_t) + \langle \mathbf{c}(z_t), \bm{\theta} \rangle - \frac{1}{T} \sum_{t'=1}^T \left( b(z_{t'}) + \langle \mathbf{c}(z_{t'}), \bm{\theta} \rangle \right) \right)^2 \\
		& = \sum_{t=1}^T\left(b(z_t) + \langle \mathbf{c}(z_t), \bm{\theta} \rangle - \left( \bar{b} + \langle \bar{\mathbf{c}}, \bm{\theta} \rangle \right) \right)^2 \\
		& = \sum_{t=1}^T\left(\left(b(z_t) - \bar{b}\right) + \left< \mathbf{c}(z_t) - \bar{\mathbf{c}}, \bm{\theta} \right> \right)^2,
\end{align*}
where $\bar{b} = \frac{1}{T} \sum_{t=1}^T b(z_t)$ and $\bar{\mathbf{c}} = \frac{1}{T} \sum_{t=1}^T \mathbf{c}(z_{t})$.

\section{Omitted proofs}
\label{sec:proofs}

This section contains several proofs that were omitted in the main body of the paper.

\subsection{Proof of Proposition~\ref{pro:optimal}}
\label{sec:optimal}

\begin{proof}
	The cost function $C(\bm{\theta}) = \sum_{t=1}^T\left(\left(b(z_t) - \bar{b}\right) + \left< \mathbf{c}(z_t) - \bar{\mathbf{c}}, \bm{\theta} \right> \right)^2$ can be rewritten in the standard least-squares form as follows: $C(\bm{\theta}) = \|\bm{y} + \bm{X}\bm{\theta}\|_2^2$, where $\bm{X} = \begin{bmatrix} (\mathbf{c}(z_t) - \bar{\mathbf{c}})^\top \end{bmatrix}_{t \in \{1,\hdots,T\}} \in \mathbb{R}^{T \times H}$ and $\bm{y} = \begin{bmatrix} b(z_t) - \bar{b} \end{bmatrix}_{t \in \{t,\hdots,T\}} \in \mathbb{R}^T$.
	When expanded, we have $C(\bm{\theta}) = \bm{y}^\top \bm{y} + 2 \bm{y}^\top \bm{X}\bm{\theta} + \bm{\theta}^\top \bm{X}^\top \bm{X}\bm{\theta}$.
	Since $\bm{X}^\top \bm{X}$ is positive semidefinite, we have that $C(\bm{\theta})$ is convex.
	Therefore, it has a global minimum, and there exists a vector $\bm{\theta}^* \in \mathbb{R}^H$ that reaches it, as required.
\end{proof}

\subsection{Proof of Proposition~\ref{pro:ivw}}
\label{sec:ivw}

\begin{proof}
	Let $\hat v(z_1), \hdots, \hat v(z_T)$ be independent unbiased AIVAT estimates with variances $\Var(\hat v(z_1)), \hdots, \Var(\hat v(z_T))$.
	The weighted average $\bar v^* = \sum_{t=1}^T w_t^* \hat v(z_t)$ has variance $\Var(\bar v^*) = \sum_{t=1}^T (w_t^*)^2 \Var(\hat v(z_t))$.
	It is a classical result that this variance can be minimized by setting each weight $w_t$ to be proportional to $\frac{1}{\Var(\hat v(z_t))}$; cf.~\citet[Ch.~4]{hartungetal2011}.
\end{proof}

\subsection{Proof of Proposition~\ref{pro:estimator-bias}}
\label{sec:estimator-bias}

\begin{proof}
	We have that
	\begin{align*}
		\E[\bar v^*|\sigma]
			& = \E\left[\sum_{t=1}^T w_t^* \hat v(z_t)\middle|\sigma\right] \\
			& = \sum_{t=1}^T \E[w_t^* \hat v(z_t)|\sigma] \\
			& = \sum_{t=1}^T \left(\E[w_t^*|\sigma]\E[\hat v(z_t)|\sigma] + \Cov(w_t^*, \hat v(z_t)|\sigma)\right) \\
			& = \sum_{t=1}^T \E[w_t^*|\sigma]\E[\hat v(z_t)|\sigma] + \sum_{t=1}^T \Cov(w_t^*, \hat v(z_t)|\sigma) \\
			& = \E_{z \in Z}[v(z)|\sigma] \sum_{t=1}^T \E[w_t^*|\sigma] + \sum_{t=1}^T \Cov(w_t^*, \hat v(z_t)|\sigma) \\
			& = \E_{z \in Z}[v(z)|\sigma] \E\left[\sum_{t=1}^T w_t^*\middle|\sigma\right] + \sum_{t=1}^T \Cov(w_t^*, \hat v(z_t)|\sigma).
	\end{align*}
	We assume normalized weights, so $\E[\bar v^*|\sigma] = \E_{z \in Z}[v(z)|\sigma] + \sum_{t=1}^T \Cov(w_t^*, \hat v(z_t)|\sigma)$.
	If $\sum_{t=1}^T \Cov(w_t^*, \hat v(z_t)|\sigma) = 0$, then $\E[\bar v^*|\sigma] = \E_{z \in Z}[v(z)|\sigma]$, as required.
	And if $\sum_{t=1}^T \Cov(w_t^*, \hat v(z_t)|\sigma) \ne 0$, then $\E[\bar v^*|\sigma] \ne \E_{z \in Z}[v(z)|\sigma]$, as required.
	Let $S = \sum_{t = 1}^T w_t$ and let $w$ and $v(z)$ be random variables for $w_t$ and $v(z_t)$.
	Since we assume the data points are i.i.d., we have that $\E[w_t] = \E[w]$, $\E[v(z_t)] = \E[v(z)]$, and $\E[S] = T \E[w]$.
	Then, we have that
	\begin{align*}
		\bias
			& = \sum_{t=1}^T \Cov(w_t^*, \hat v(z_t)|\sigma) \\
			& = \sum_{t=1}^T \Cov\left(\frac{w_t}{S}, \hat v(z_t)\middle|\sigma\right) \\
			& \approx \sum_{t=1}^T \left(\frac1{\E[S]} \Cov(w_t, \hat v(z_t)|\sigma) - \frac{\E[w_t]}{\E[S]^2} \Cov(S, \hat v(z_t)|\sigma)\right) \\
			& = \frac T{\E[S]} \Cov(w, \hat v(z)|\sigma) - \frac{T \E[w]}{\E[S]^2} \Cov(S, \hat v(z)|\sigma) \\
			& = \frac T{T \E[w]} \Cov(w, \hat v(z)|\sigma) - \frac{T \E[w]}{(T \E[w])^2} \Cov(S, \hat v(z)|\sigma) \\
			& = \frac1{\E[w]} \Cov(w, \hat v(z)|\sigma) - \frac1{T \E[w]} \Cov(S, \hat v(z)|\sigma) \\
			& \approx \frac{\Cov(w, \hat v(z)|\sigma)}{\E[w]},
	\end{align*}
	as required.
\end{proof}

\section{Rules of Texas hold'em}
\label{sec:texas-holdem}

In Texas hold'em, the game begins with every player being dealt two private cards.
This is followed by a preflop betting round, where players take turns betting, matching (\textit{i.e.}, checking or calling), or giving up (\textit{i.e.}, folding) based on their hidden cards and beliefs about what cards others hold.
After the betting round, the dealer reveals three public cards as part of the flop, which is then followed by another betting round.
After that, a single card is revealed as the turn card, and another betting round ensues.
Finally, the river card is revealed publicly, and the final betting round takes place, after which the game terminates.
The game can also end when only one player remains in the pot (as others have folded).
In no-limit settings, players are allowed to bet or raise whatever amount they want (capped by their number of chips) however many times they want, whereas this amount and the maximum number of bets are determined by the rules of the game in fixed-limit settings.

\section{Feature engineering of states in 6-max no-limit Texas hold'em}
\label{sec:feature-engineering}

For the feature engineering of states in 6-max no-limit Texas hold'em, we took an approach similar to that of~\citet{whiteandbowling2009} in their evaluation of 6-player limit Texas hold'em data.
Specifically, we extracted the pot amount, and each player's hand strength and hand strength squared values.
The hand strength is the expected probability of beating another random hand, while the hand strength squared is the expected squared probability of beating a random hand.
All hand strength (squared) values were also multiplied by the pot amount and exponentiated by the number of non-folded players.

\section{Standard error of the (weighted) mean}
\label{sec:standard-error-of-the-weighted-mean}

In this section, we provide the definition of the standard error of the (weighted) mean.
When unweighted, the standard definition in the field of statistics is
\[
	SE = \frac{s}{\sqrt{N}},
\]
where $s$ is the sample standard deviation and $N$ is the sample size.
Note that
\[
	s^2 = \frac{\sum_{i=1}^N (x_i - \bar x)^2}{N - 1},
\]
where $x_1, \dots, x_N$ are the sample values and $\bar x$ is the sample mean.
Under inverse-variance weighting,~\citet{kirchner} gives the following definition:
\[
	SE^* = \frac{s^*}{\sqrt{N}},
\]
where $s^*$ is the weighted sample standard deviation and $w_1, \dots, w_N$ are the weights corresponding to the sample values $x_1, \dots, x_N$, respectively.
Here,
\[
	(s^*)^2 = \left( \frac{\sum_{i=1}^N w_i (x_i - \bar x^*)^2}{\sum_{i=1}^N w_i} \right) \left( \frac{N}{N - 1} \right),
\]
where $\bar x^* = \frac{\sum_{i=1}^N w_i x_i}{\sum_{i=1}^N w_i}$ is the weighted sample mean.

\section{Additional experimental results on heuristic uncertainty}
\label{sec:observations}

In learning the heuristic value function for our MIVAT estimator, we chose two more models that allow the calculation of output variance: \textit{Bayesian ridge (BR)}~\cite{tipping} and \textit{automatic relevance determination (ARD) regression}~\cite{mackay}.
We refer to the two new MIVAT estimators as follows: MIVAT-BR and MIVAT-ARDR, respectively.
Due to the inherent limitation of the models, we assumed for the two estimators that the heuristic value outputs are uncorrelated, \textit{i.e.}, (\ref{eqn:estimate-variance}) simplifies to (\ref{eqn:estimate-variance-uncorrelated}).
This limitation is why we relegated the use of these models to the appendix.
Except for subsampling, the training procedure for both estimators was identical to that of MIVAT-GPR.

\begin{table}
	\centering
	\caption{
		Results on the Pluribus data in milli-big blinds per hand (mbb/h).
		$SE$ stands for the standard error of the (weighted) mean, and `Est.' is an abbreviation for `Estimated'.
	}
	\label{tab:results-pluribus2}
	\begin{tabular}{cc|ccc}
		\toprule
		Estimator & Weighting & Win rate & $SE$ & Est. bias \\
		\midrule
		\multirow{2}{*}{MIVAT-BR} & Uniform & -80 & 83 & -- \\
		& IVW & 0 & \textbf{74} & 80 \\
		\multirow{2}{*}{MIVAT-ARDR} & Uniform & -80 & 83 & -- \\
		& IVW & 0 & \textbf{74} & 80 \\
		\multirow{2}{*}{MIVAT-GPR} & Uniform & -25 & 99 & -- \\
		& IVW & -22 & \textbf{75} & 3 \\
		\midrule
		Raw & Uniform & -70 & 88 & -- \\
		MIVAT-WB & Uniform & -98 & 85 & --\\
		AIVAT & Uniform & 48 & 25 & -- \\
		\bottomrule
	\end{tabular}
\end{table}

Table~\ref{tab:results-pluribus2} shows the performance of both MIVAT-BR and MIVAT-ARDR, in addition to MIVAT-GPR and our baselines.
When taking the simple mean, both MIVAT-BR and MIVAT-ARDR outperform MIVAT-GPR, but this is understandable, as we used far less data to train GPR due to computational constraints.
IVW allowed our MIVAT estimators to yield the best performance (save for AIVAT).
The difference in the uncertainty achieved between uniform averaging and IVW averaging is smaller for MIVAT-BR and MIVAT-ARDR than for MIVAT-GPR.
In general, for all our MIVAT estimators under IVW, the bias is approximately equal to the difference between the estimates under different weighting schemes.
Indeed, offsetting the IVW estimates using the estimated bias recovers an estimator that is in line with using uniform averaging, which is unbiased.
Also, the IVW win rates for MIVAT-BR and MIVAT-ARDR are not exactly zero but are shown so purely due to rounding coincidence.
Before rounding, the values were approximately $-0.233$ for MIVAT-BR and $-0.217$ for MIVAT-ARD.
Note that the values for MIVAT-BR and MIVAT-ARDR are similar because both BR and ARD are Bayesian linear regression models.

\section{Linear value function}
\label{sec:closed-form-solution}

Define a feature engineering function $\bm{\phi}: H \mapsto \R^d$ that maps a state $h$ to a respective vector of features.
When linear, we require the value function to be of the following form:
\begin{equation}
	v_{\bm{\theta}}'(h) = \bm{\phi}(h)^\top \bm{\theta},
	\label{eqn:linear-value-function}
\end{equation}
where $\bm{\theta} \in \R^d$ is the parameter vector.
Then,
\begin{align*}
	& \hat v_{\bm{\theta}}(z) = b(z) + \sum_{h \in H} \mathbf{c}(z)_h v_{\bm{\theta}}'(h) = b(z) + \sum_{h \in H} \mathbf{c}(z)_h \bm{\phi}(h)^\top \bm{\theta} \\
	& \qquad = b(z) + \left(\sum_{h \in H} \mathbf{c}(z)_h \bm{\phi}(h)\right)^\top \bm{\theta} = b(z) + \bm{\psi}(z)^\top \bm{\theta},
\end{align*}
where $\bm{\psi}(z) = \sum_{h \in H} \mathbf{c}(z)_h \bm{\phi}(h)$.

\subsection{Closed-form solution}

For MIVAT,~\citet{whiteandbowling2009} derived a closed-form formula of the optimal linear function parameter $\bm{\theta}^*$ for the optimization problem shown in (\ref{eqn:optimization-problem}).
The steps they provide can be closely followed to yield an analogous solution for AIVAT.
With $\bar b = \frac{1}{T} \sum_{t=1} b(z_t)$ and $\overline{\bm{\psi}} = \frac{1}{T} \sum_{t=1}^T \bm{\psi}(z_t)$, the parameter of the linear function that optimizes (\ref{eqn:optimization-problem}) is as follows:\footnote{
	A careful reader may notice that the latter factor in (\ref{eqn:closed-form-solution}) is negated in the equivalent formulation of~\citet{whiteandbowling2009}.
	This is because we hide the negative sign inside $\mathbf{c}(z)_{h' \cdot a}$; see (\ref{eqn:negative-coefficient}).
}\textsuperscript{,}\footnote{
	The steps we took to obtain this are isolated in Appendix~\ref{sec:closed-form-solution-2}.
}
\begin{equation}
	\bm{\theta}^* = \left(\left(\frac{1}{T} \sum_{t=1}^T \bm{\psi}(z_t) \bm{\psi}(z_t)^\top\right) - \overline{\bm{\psi}}\,\overline{\bm{\psi}}^\top \right)^{-1} \left(\bar b\,\overline{\bm{\psi}} - \left(\frac{1}{T} \sum_{t=1}^T b(z_t)\bm{\psi}(z_t)\right)\right).
	\label{eqn:closed-form-solution}
\end{equation}

We can simplify this further to
\begin{equation}
	\bm{\theta}^* = \mathbf{X}^{-1} \mathbf{y},
	\label{eqn:closed-form-solution-2}
\end{equation}
where $\mathbf{X} = \left(\left(\frac{1}{T} \sum_{t=1}^T \bm{\psi}(z_t) \bm{\psi}(z_t)^\top\right) - \overline{\bm{\psi}}\,\overline{\bm{\psi}}^\top\right)$ and $\mathbf{y} = \left(\bar b\,\overline{\bm{\psi}} - \left(\frac{1}{T} \sum_{t=1}^T b(z_t)\bm{\psi}(z_t)\right)\right)$.
Although left unstated by~\citet{whiteandbowling2009}, for the above closed-form solution to exist, $\mathbf{X}$ must be invertible.
We provide the conditions required for this to be true.

\begin{theorem}
	If no hyperplane in $\R^d$ contains all of $\{\bm{\psi}(z_t)\}_{t=1}^T$, then $\mathbf{X}$ is invertible.
\end{theorem}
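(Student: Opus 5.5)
The plan is to identify $\mathbf{X}$ as the (biased) empirical covariance matrix of the points $\bm{\psi}(z_1),\dots,\bm{\psi}(z_T)$. Once that is done, positive definiteness follows from the hyperplane hypothesis.

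First, expand
\[
	\frac{1}{T}\sum_{t=1}^T (\bm{\psi}(z_t)-\overline{\bm{\psi}})(\bm{\psi}(z_t)-\overline{\bm{\psi}})^\top
	= \frac{1}{T}\sum_{t=1}^T \bm{\psi}(z_t)\bm{\psi}(z_t)^\top - \overline{\bm{\psi}}\,\overline{\bm{\psi}}^\top .
\]
This is a routine computation: the two cross terms each equal $-\overline{\bm{\psi}}\,\overline{\bm{\psi}}^\top$, and the constant term contributes $+\overline{\bm{\psi}}\,\overline{\bm{\psi}}^\top$. It shows that $\mathbf{X}$ equals the left-hand side, so $\mathbf{X}$ is symmetric.

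Second, for any $\mathbf{u}\in\R^d$ we have
\[
	\mathbf{u}^\top\mathbf{X}\mathbf{u}=\frac1T\sum_{t=1}^T\bigl(\mathbf{u}^\top(\bm{\psi}(z_t)-\overline{\bm{\psi}})\bigr)^2\ge 0,
\]
so $\mathbf{X}$ is positive semidefinite. A symmetric positive semidefinite matrix is invertible exactly when it is positive definite, so it suffices to show that $\mathbf{u}^\top\mathbf{X}\mathbf{u}=0$ forces $\mathbf{u}=\mathbf{0}$.

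Third, argue by contrapositive. Suppose $\mathbf{u}\neq\mathbf{0}$ and $\mathbf{u}^\top\mathbf{X}\mathbf{u}=0$. Every squared term in the sum vanishes, so $\mathbf{u}^\top\bm{\psi}(z_t)=\mathbf{u}^\top\overline{\bm{\psi}}$ for all $t$. Hence every $\bm{\psi}(z_t)$ lies in the affine hyperplane $\{\mathbf{x}\in\R^d:\mathbf{u}^\top\mathbf{x}=\mathbf{u}^\top\overline{\bm{\psi}}\}$, which is a genuine hyperplane because $\mathbf{u}\ne\mathbf{0}$. This contradicts the hypothesis, so $\mathbf{X}$ is positive definite and therefore invertible.

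The only point needing care is the meaning of ``hyperplane'' in the statement. The argument needs it to include affine hyperplanes, not just those through the origin. A linear-only reading would not suffice: all $\bm{\psi}(z_t)$ could lie on an affine hyperplane avoiding the origin, in which case $\mathbf{X}$ is singular. I would state explicitly that affine hyperplanes are meant.

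As a side remark, the hypothesis implicitly requires $T\ge d+1$, since $T\le d$ points always lie on a common affine hyperplane.
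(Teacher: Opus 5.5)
Your proof is correct and follows essentially the same route as the paper. Both rewrite $\mathbf{X}$ as the centered empirical covariance $\frac{1}{T}\sum_t(\bm{\psi}(z_t)-\overline{\bm{\psi}})(\bm{\psi}(z_t)-\overline{\bm{\psi}})^\top$. Both then argue by contrapositive that a nonzero $\mathbf{a}$ with $\mathbf{a}^\top\mathbf{X}\mathbf{a}=0$ places every $\bm{\psi}(z_t)$ on the affine hyperplane $\mathbf{x}^\top\mathbf{a}=\overline{\bm{\psi}}^\top\mathbf{a}$. The only difference is that the paper obtains $\mathbf{a}^\top\mathbf{X}\mathbf{a}=0$ directly from $\mathbf{X}\mathbf{a}=\mathbf{0}$ rather than via positive semidefiniteness. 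Your remark that ``hyperplane'' must be read as affine is apt, and it matches how the paper's proof implicitly uses the term.
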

\begin{proof}
	We prove the contrapositive: if $\mathbf{X}$ is non-invertible, then there is a hyperplane in $\R^d$ containing all of $\{\bm{\psi}(z_t)\}_{t=1}^T$.

	If $\mathbf{X}$ is singular, then $\exists \mathbf{a} \in \R^d$ such that $\mathbf{a} \ne \mathbf{0}$ and $\mathbf{X}\mathbf{a} = \mathbf{0}$. The latter implies $\mathbf{a}^\top\mathbf{X}\mathbf{a} = 0$. Using
	\[
		\mathbf{X} = \left(\left(\frac{1}{T} \sum_{t=1}^T \bm{\psi}(z_t) \bm{\psi}(z_t)^\top\right) - \overline{\bm{\psi}}\,\overline{\bm{\psi}}^\top\right) = \frac{1}{T} \left(\sum_{t=1}^T (\bm{\psi}(z_t) - \overline{\bm{\psi}})(\bm{\psi}(z_t) - \overline{\bm{\psi}})^\top\right),
	\]
	we obtain
	\[
		0 = \mathbf{a}^\top\mathbf{X}\mathbf{a} = \mathbf{a}^\top \frac{1}{T} \left(\sum_{t=1}^T (\bm{\psi}(z_t) - \overline{\bm{\psi}})(\bm{\psi}(z_t) - \overline{\bm{\psi}})^\top\right) \mathbf{a} = \frac{1}{T} \sum_{t=1}^T ((\bm{\psi}(z_t) - \overline{\bm{\psi}})^\top \mathbf{a})^2,
	\]
	implying, $\forall t \in \{1, \hdots, T\}: (\bm{\psi}(z_t) - \overline{\bm{\psi}})^\top \mathbf{a} = 0$, and hence $\bm{\psi}(z_t)^\top \mathbf{a} = \overline{\bm{\psi}}^\top \mathbf{a}$. Note that $\overline{\bm{\psi}}^\top \mathbf{a} = c$ is a constant and, defining $\mathbf{x} \in \R^d$, $\mathbf{x}^\top \mathbf{a} = c$ is an equation of a hyperplane.
\end{proof}
This prohibits one from applying common feature engineering techniques such as using separate linear coefficients for different stages in the game (\textit{e.g.}, in poker: preflop, flop, turn, and river).
On top of this, it lacks any regularization, which makes it prone to overfitting.
It is also unclear how one can estimate the variance of its outputs.

\section{Derivation of the closed-form solution}
\label{sec:closed-form-solution-2}

This appendix section contains the steps to obtain the optimal linear value function parameter for the optimization problem given by~\citet{whiteandbowling2009} optimization problem stated in (\ref{eqn:optimization-problem}).
Note that this is for the case of the linear heuristic value function shown in (\ref{eqn:linear-value-function}), not the parameterized version in (\ref{eqn:parameterized-value-function}).
Restating the optimization problem,
\begin{align*}
	\underset{\bm{\theta} \in \R^d}{\textbf{Minimize: }} C(\bm{\theta})
		& = \sum_{t=1}^T \left(\hat v_{\bm{\theta}}(z_t) - \frac{1}{T} \sum_{t'=1}^T \hat v_{\bm{\theta}}(z_{t'})\right)^2 \\
		& = \sum_{t=1}^T \left(\left(b(z_t) + \bm{\psi}(z_t)^\top \bm{\theta}\right) - \frac{1}{T} \sum_{t'=1}^T \left(b(z_{t'}) + \bm{\psi}(z_{t'})^\top \bm{\theta}\right)\right)^2 \\
		& = \sum_{t=1}^T \left(\left(b(z_t) - \frac{1}{T} \sum_{t'=1}^T b(z_{t'})\right) + \left(\bm{\psi}(z_t) - \frac{1}{T} \sum_{t'=1}^T \bm{\psi}(z_{t'})\right)^\top \bm{\theta}\right)^2 \\
		& = \sum_{t=1}^T \left((b(z_t) - \bar b) + \left(\bm{\psi}(z_t) - \overline{\bm{\psi}}\right)^\top \bm{\theta}\right)^2.
\end{align*}
They then took the derivative of $C$ with respect to $\bm{\theta}$ and set it to zero:
\begin{equation*}
	0 = \left. \frac{\partial C}{\partial\bm{\theta}}\right|_{\bm{\theta}=\bm{\theta}^*} = \sum_{t=1}^T 2 \left(\bm{\psi}(z_t) - \overline{\bm{\psi}}\right) \left((b(z_t) - \bar b) + \left(\bm{\psi}(z_t) - \overline{\bm{\psi}}\right)^\top \bm{\theta}^*\right).
\end{equation*}
Simplifying and solving for $\bm{\theta}^*$, the following closed-form solution was obtained:
\[
	\bm{\theta}^* = \left(\left(\frac{1}{T} \sum_{t=1}^T \bm{\psi}(z_t) \bm{\psi}(z_t)^\top\right) - \overline{\bm{\psi}}\,\overline{\bm{\psi}}^\top \right)^{-1} \left(\bar b\,\overline{\bm{\psi}} - \left(\frac{1}{T} \sum_{t=1}^T b(z_t)\bm{\psi}(z_t)\right)\right).
\]

\section{Testbench specification}
\label{sec:resources}

Our testbench contains an AMD Ryzen 9 3900X 12-core, 24-thread desktop processor and 128 GB memory.

%%%%%%%%%%%%%%%%%%%%%%%%%%%%%%%%%%%%%%%%%%%%%%%%%%%%%%%%%%%%

% \newpage
% \input{checklist.tex}

\end{document}